\documentclass[lettersize,journal]{IEEEtran}

\usepackage{array}
\usepackage[caption=false,font=normalsize,labelfont=sf,textfont=sf]{subfig}
\usepackage{textcomp}
\usepackage{stfloats}
\usepackage{url}
\usepackage{verbatim}
\usepackage{cite}
\usepackage{amsmath}
\usepackage{amsfonts}
\usepackage{amssymb}
\usepackage{graphicx}
\usepackage{booktabs}
\usepackage{subeqnarray}
\usepackage{algorithmic}
\usepackage{algorithm}
\usepackage{amstext}
\usepackage{psfrag}
\usepackage{float}
\usepackage{multirow}
\usepackage{color}
\usepackage[colorlinks=true, allcolors=blue]{hyperref}
\usepackage{tabularx}
\usepackage[table]{xcolor}
\usepackage{booktabs}
\usepackage{comment}
\usepackage{enumitem}
\newtheorem{theorem}{Theorem}

\newtheorem{proof}{Proof}

\newif\ifcomments

\commentsfalse 

\ifcomments

\newcommand{\MB}[1]{\textcolor{red}{{\bf MB}: {#1}}}
\newcommand{\NP}[1]{\textcolor{magenta}{{\bf NP}: {#1}}}
\newcommand{\BT}[1]{\textcolor{violet}{{\bf BT}: {#1}}}
\newcommand{\NM}[1]{\textcolor{teal}{{\bf NM}: {#1}}}
\newcommand{\YG}[1]{\textcolor{purple}{{\bf YG}: {#1}}}
\else
\newcommand{\MB}[1]{}
\newcommand{\NP}[1]{}
\newcommand{\BT}[1]{}
\newcommand{\NM}[1]{}
\newcommand{\YG}[1]{}
\fi

\newcommand{\veps}{\mathbb{\varepsilon}}
\newcommand{\XX}{\mathcal{X}}
\newcommand{\YY}{\mathcal{Y}}
\newcommand{\BB}{\mathcal{B}}
\newcommand{\CC}{\mathcal{C}}
\newcommand{\DD}{\mathcal{D}}
\newcommand{\EE}{\mathcal{E}}
\newcommand{\FF}{\mathcal{F}}
\newcommand{\NN}{\mathcal{N}}

\begin{document}

\title{The Role of Input Dimensionality in the Emergence and Targeted Control of Adversarial Examples}

\author{Nasrin Malekzadeh Goradel, 
Niccolò Pancino, ~\IEEEmembership{Member,~IEEE,} 
Yaser Gholizade Atani, 
Benedetta Tondi,~\IEEEmembership{Member,~IEEE,} 
Giovanni Bellettini, 
Mauro Barni,~\IEEEmembership{Fellow,~IEEE}
\thanks{\noindent \textbf{Acknowledgments.} The research by N. M. Goradel and Prof. M. Barni was funded by a grant provided by Leonardo SpA. The work was also  partially supported by the EU - NextGenerationEU, under the National Recovery and Resilience Plan (NRRP) - Extended Partnership SERICS, Spoke 3 Attacks and Defences, Mission 4, Component 2, Investment 1.3, AI-RESCUE.  PE00000014, CUP: B63C24000490006.}
}

\markboth{Journal of \LaTeX\ Class Files,~Vol.~14, No.~8, August~2021}%
{Shell \MakeLowercase{\textit{et al.}}: A Sample Article Using IEEEtran.cls for IEEE Journals}

\IEEEpubid{0000--0000/00\$00.00~\copyright~2021 IEEE}

\maketitle

\begin{abstract}
Several theoretical works have tried to explain the adversarial vulnerability of deep neural networks through properties of high-dimensional geometry. However, the assumptions underlying these works are rarely examined empirically, and systematic evidence remains limited. In this work, we present a systematic study of the role of input dimensionality in both the emergence and the targeted control of adversarial examples. We first analyse the scope and limitations of existing theoretical frameworks based on concentration of measure, showing that real image classes exhibit strong empirical localization, beyond what such theories typically assume. We then conduct an extensive empirical evaluation across hierarchical image datasets spanning a wide range of input dimensionalities and diverse neural architectures. Our results consistently show that adversarial examples become easier to construct as dimensionality increases.
We also investigate how input dimensionality affects the additional difficulty of crafting targeted adversarial examples. In particular, we provide theoretical arguments showing that high-dimensional geometry implies that enforcing a specific target label entails only a limited additional distortion compared to untargeted attacks. We corroborate this insight through extensive experiments, demonstrating that the gap between targeted and untargeted perturbations remains small and further narrows as input dimensionality increases. While, taken together, our findings establish high input dimensionality as a fundamental factor underlying the emergence and targeted control of adversarial examples, whether this phenomenon primarily arises from the interplay between high-dimensional geometry and data distributions or from the architectural properties of deep neural networks remains an open question.
\end{abstract}

\begin{IEEEkeywords}
Adversarial examples, concentration of measure, high-dimensional geometry, targeted adversarial examples
\end{IEEEkeywords}

\section{Introduction}
\IEEEPARstart{S}{ince} their existence was first pointed out \cite{szegedy2013intriguing}, adversarial examples - small, carefully crafted perturbations that cause machine learning models to misclassify inputs - have become the subject of intense research. A commonly proposed explanation for the existence of adversarial examples points to the concentration of measure phenomenon \cite{Ledoux1}, with several works suggesting that adversarial examples are an inherent consequence of high-dimensional geometry \cite{fawzi2018analysis, shafahiadversarial, mahloujifar2019curse, gilmer2018advsph, gilmer2019adversarial, dohmatob2019generalized, de2021adversarial}.
The core intuition behind these works is that, due to the concentration of measure, as the dimensionality of the input space increases, most data points become arbitrarily close either to class decision boundaries \cite{shafahiadversarial} or to misclassification regions \cite{fawzi2018analysis,mahloujifar2019curse}. 
While these theoretical analyses are compelling due to their elegance and generality, they rely on assumptions about the concentration properties of data distributions whose validity in practical settings remains largely unverified. Some works have tried to clarify this issue by characterizing the conditions under which adversarial examples may be avoidable. In particular, \cite{pal2023avoidable} shows that when class-conditional distributions are strongly localized relatively to inter-class separation, adversarial robustness is in principle achievable.
Whether the conditions stated in \cite{pal2023avoidable} are satisfied by real-world data, and how they interact with input dimensionality in practical learning scenarios, however, remains unclear.

Motivated by the above arguments, in this work we take a systematic empirical perspective on the role of input dimensionality in the emergence of adversarial examples.
To start with, we introduce a pool of hierarchical datasets composed of images at multiple resolutions, obtained by downsampling high-resolution images to progressively smaller sizes. This allows us to vary input dimensionality in a controlled manner while preserving semantic content. Using these datasets, we first show that as input dimensionality increases, class-conditional distributions become increasingly localized, revealing a geometric regime that departs significantly from the assumptions underlying most concentration-based theoretical analyses.
Then, we perform an extensive experimental evaluation across diverse neural architectures, including both standard and robustness-enhanced models, and consistently observe that adversarial examples become easier to construct as input dimensionality increases.

In the second part of the paper, we move beyond untargeted adversarial examples and investigate targeted attacks, in which the adversary aims to induce misclassification into a specific target class. Targeted attacks are commonly regarded as more difficult to construct due to their stronger control requirements, in addition, most theoretical analyses focus on the untargeted setting.
As a first contribution, we extend existing geometric arguments for untargeted adversarial examples to the targeted setting only. In particular, relying on a classical extension of the concentration function to sets of arbitrary size \cite{Ledoux1}, we show that, under mild assumptions on the number of classes and the volume occupied by the decision regions defined by the attacked classifier, the additional difficulty of crafting targeted adversarial examples remains inherently limited and becomes asymptotically negligible as dimensionality grows. 
Then, to assess the validity of the theoretical predictions - which, like for untargeted attacks, rely on assumptions whose plausibility in real-world settings is questionable - we conduct a systematic empirical study across the hierarchical datasets introduced earlier. Our results consistently reveal that: i) targeted attacks also become easier to craft as dimensionality increases, and ii) the additional cost required to control the target class of the attack remains limited and diminishes with input size.

With the above ideas in mind, the main contributions of this paper can be summarised as follows:

\begin{itemize}[leftmargin=*]
\item We introduce a pool of hierarchical image datasets that enable a controlled study of adversarial vulnerability as a function of input dimensionality.
\item We show that as input dimensionality increases, the support of class-distributions shrinks  at an exponential rate, thus challenging the key assumptions of most theoretical analysis relying on the concentration of measure phenomenon.
\item We empirically demonstrate that, despite strong mass localization of class distributions, adversarial examples become progressively easier to craft as input dimensionality increases, with the minimum distortion required to achieve a fixed attack success rate decreasing systematically when resolution increases.
\item We extend existing geometric concentration arguments to the case of targeted adversarial attacks.
\item We empirically show that the greater control afforded by targeted attacks comes at only a modest additional cost, and that this cost decreases with input dimensionality.
\end{itemize}

Overall, our analysis provides empirical evidence for a dimensionality-driven effect akin to a curse of dimensionality in the emergence and targeted control of adversarial examples. At the same time, the empirical violation of key assumptions underlying existing theoretical explanations suggests that high-dimensional geometry alone may be insufficient to fully explain adversarial vulnerability, leaving open the question of how geometry, real data distributions, and classifier architecture jointly contribute to this phenomenon.

\section{Prior Art}

The first attempt to use high-dimensional geometry to explain the emergence of adversarial examples can be traced back to the work of Gilmer et al.~\cite{gilmer2018advsph}. In that paper, geometric properties of high-dimensional spaces are exploited to show that, when data are distributed on concentric high-dimensional spheres, adversarial examples exist even for an optimal classifier. Similar arguments were later employed by Fawzi et al.~\cite{FFF18} in a much more general setting, showing that for artificially generated images obtained from a Gaussian distribution in a latent space and a smooth mapping between latent and image spaces, adversarial examples necessarily emerge for \emph{any} classifier.

While concentration phenomena are already implicit in~\cite{FFF18}, the central role of concentration of measure in adversarial vulnerability is made explicit in~\cite{diochnos2018adversarial, mahloujifar2019curse}. A further merit of~\cite{diochnos2018adversarial} is the systematic analysis of several equivalent definitions of adversarial examples, distinguishing between the case in which the attacker aims at pushing a sample into the \emph{error region} of the classifier (ER), inducing a \emph{change of prediction} (CP), or crafting a sample that is misclassified with respect to the true label of the original input (\emph{Corrupted Instance}, CI).

Adopting the ER definition Mahloujifar et al.~\cite{mahloujifar2019curse} prove that adversarial examples are inevitable for any non-ideal classifier and for any data distribution exhibiting sufficiently strong concentration properties, including L\'evy families and the uniform distribution over the unit hypercube.
The results in ~\cite{mahloujifar2019curse} are further extended in~\cite{dohmatob2019generalized}, where they are used to analyze the trade-off between standard and adversarial accuracy.

The analyses in~\cite{mahloujifar2019curse} and~\cite{dohmatob2019generalized} rely on the ER definition and therefore apply only to non-ideal classifiers, i.e., classifiers with non-zero error probability in the absence of attacks. This limitation is removed by Shafahi et al.~\cite{shafahiadversarial}, who show that, for high-dimensional data, adversarial examples are inevitable even for \emph{perfect} classifiers, provided that inputs are distributed over the unit hypercube and that the class-conditional densities are uniformly bounded.

A natural question stemming from the analyses reviewed above is whether natural images actually satisfy the concentration properties assumed by theoretical works. In~\cite{shafahiadversarial}, the boundedness assumption on the input distribution is investigated in a simplified setting in which, starting from small images such as those in MNIST, the input dimensionality is artificially increased by duplicating each pixel into a \(b \times b\) block of identical values.
The findings in~\cite{shafahiadversarial}, however, are not conclusive, primarily due to the simplicity of the considered setting.
More broadly, several recent works have attempted to empirically assess the concentration properties of real-world datasets, including~\cite{mahloujifar2019empirically, prescott2021improved, zhang2022incorporating}, where the concentration function of MNIST and CIFAR-10 datasets is estimated and used to validate the predictions of theory. While these studies establish an important bridge between theoretical insights and empirical data, they do not directly address the role of input dimensionality in the emergence of adversarial examples for two main reasons. First, the analysis is restricted to datasets composed of low-resolution images only. Second, and more importantly, no effort is made to investigate how adversarial vulnerability evolves as the image resolution increases \emph{within the same visual domain}, and hence how concentration itself scales with dimensionality under fixed semantic content.

The question of whether the distributional assumptions underlying theoretical results are satisfied by natural images becomes even more relevant in light of recent work showing that adversarial examples may, in principle, be avoidable when the data distribution satisfies sufficiently strong geometric constraints. In particular, Pal et al.~\cite{pal2023avoidable} show that adversarial examples are not inevitable if the class-distribution of the underlying data is sufficiently localized. They prove that a necessary condition for adversarial examples to be avoidable is that the volume of the support of class-distributions decay exponentially fast with the ambient dimension, at a rate depending on the admissible perturbation. They also derive a sufficient condition to ensure avoidability
Notably, these requirements are considerably stronger than those typically expected to hold for natural images, thereby sharpening the following central question: {\em as image resolution increases within a fixed domain, do real-world data distributions evolve toward a regime of stronger concentration and inevitability of adversarial examples, or toward one in which adversarial examples might, in principle, be avoidable?} The first goal of this paper is to answer this question empirically by relying on a comprehensive experimental campaign
explicitly controlling the effect of input dimensionality through resolution-scaled datasets sharing the same semantic content.

Most theoretical and practical works on adversarial examples focus on the untargeted setting. Extending concentration-based existence results to targeted attacks is not immediate, since the target region of the attack may occupy only a small fraction of the input space, preventing the direct application of basic concentration arguments.
As a consequence, the additional distortion required to enforce a prescribed target class - and, crucially, how such a cost scales with input dimensionality - remains largely unexplored. Even the definitions introduced in \cite{diochnos2018adversarial}, as well as the metrics commonly used to quantify adversarial robustness, require some care in the targeted setting, for instance by distinguishing between average distortions over all source-target class pairs and worst-case distortions associated with the most difficult class transitions.

From a practical perspective, several works have shown that targeted adversarial attacks can also be crafted effectively, albeit typically requiring larger perturbations than their untargeted counterparts \cite{CarliniWagner2017,CroceHein2020,Tolias2019,Dong18,akhtar2018threat}. However, a systematic investigation of how input dimensionality affects the ease of constructing targeted adversarial examples is still missing. 
This observation naturally leads to the second research question addressed in this paper: \emph{As input dimensionality increases within a fixed visual domain, how does the additional distortion required to enforce a specific target class evolve?} In particular, does the gap between targeted and untargeted attacks widen, remain stable, or shrink as image resolution grows? The second part of our work is devoted to answering this question, both from a theoretical and an empirical standpoint.

\section{Preliminary Notions, Definitions and Research Questions}
\label{sec.notanddef}

In this section we introduce the mathematical notation used throughout the paper and provide a rigorous formulation of adversarial examples and adversarial risk. We then review the key concepts related to the concentration of measure phenomenon, and formulate the goals of this work.

Let $\XX$ be the space of the to-be-classified instances $x \in \XX$, and $\YY=\{1, \dots, m\}$ be 
the set of labels. 
In the following, with a slight abuse of notation, we indicate with $\mu$ the probability distribution of the instances in $\XX$ and the probability measure induced by such a distribution on the set $\XX$, the exact meaning being always clear from the context.
The association of instances and labels is defined by a ground truth function $c:\XX \rightarrow \YY$ \footnote{We assume that each $x \in \XX$ is linked to a unique, deterministic, ground truth label. This label might reflect, for example, a human's classification of an image. This perspective differs from probabilistic approaches, where the relationship between samples and labels is modeled by a probability distribution over $\XX \times \YY$.}. 
Given a classifier $f: \XX \rightarrow \YY$, a common way to define an adversarial example is as a perturbed sample $x' = x +\delta \in \XX$ such that $f(x') \ne f(x)$ \cite{szegedy2013intriguing}, where $\delta$ denotes an imperceptible perturbation constrained by a predefined norm.
Adversarial examples following this definition are sometimes referred to as {\em Prediction Change} (PC) adversarial examples \cite{diochnos2018adversarial}.  A limitation of the PC definition is that it does not account for the ground truth function $c$. 
For this reason, an alternative definition requires that $f(x') \ne c(x')$. Adversarial examples defined in this way are referred to as {\em Error Region} (ER) adversarial examples \cite{diochnos2018adversarial}. It is immediate to see that this second definition makes sense only for non-ideal classifiers. For an ideal classifier, in fact, $f(x) = c(x)$ for every $x$ and the conditions $f(x') \ne c(x')$ cannot be satisfied.
In \cite{diochnos2018adversarial}, a third definition of adversarial examples is introduced, namely {\em Corrupted Instance} (CI) adversarial examples, for which it is required that $f(x') \ne c(x)$. In this case, adversarial examples may also exist for an ideal classifier. In fact, in such a case, the CI definition is equivalent to the PC one.

In practical scenarios, where classifiers are not perfect  but achieve high accuracy, we often have $f(x) = c(x)$. In addition, the so called proximity assumption approximately holds, according to which, for small perturbations we (almost) always have $c(x) = c(x')$. As a result, the three definitions outlined above tend to be nearly equivalent.

With regard to the metric used to constraint the maximum allowed perturbation, most works adopt either the $\ell_2$ or the $\ell_\infty$ norm. In this work, we focus on the $\ell_2$ case.

With the above definitions, we can define the adversarial risk as the probability over the distribution of input samples that an adversarial example exists at a given distance from the unperturbed input, formally:
\begin{equation}
\texttt{Risk}_\veps = \text{Pr}\{x' \in \BB_\veps(x)\cap \XX~ s.t.~ f(x') \ne \{f(x), c(x'), c(x)\} \},
\label{eq.advrisk}
\end{equation}

where the probability is taken over the input sample distribution $\mu$, $\BB_\veps(x)$ indicates the ball of radius $\veps$ centered in $x$, and where the exact condition imposed on $f(x')$ depends on the definition of adversarial examples (PC, ER, or CI).

\subsection{Concentration of measure phenomenon}
\label{sec.concentration}
The concentration of measure phenomenon refers to a property of some probability measures in metric spaces (especially high-dimensional ones) according to which most of the probability is arbitrarily close to every large enough set. More specifically, let $(\XX, d, \mu)$ be a metric measure space, where $\XX$ is a space of dimension $n$, $d$ is a distance induced by a metric defined on $\XX$, and $\mu$ is a Borel probability measure on $\XX$.
The formalisation of the concentration of measure phenomenon passes through the definition of the concentration function $\alpha: [0, \infty) \rightarrow [0,1]$ \cite{Ledoux1}:
\begin{equation}
\alpha(\veps) := \sup \left\{ 1 - \mu(A_\veps) : A \subset \XX, \mu(A) \ge 1/2  \right\},
\label{eq.confun}
\end{equation}
where $A_\veps := \{ x \in \XX: d(x, A) \le \veps\} $ indicates the $\veps$-neighbourhood (or $\veps$-expansion) of $A$. It turns out that in many high-dimensional spaces, like high-dimensional spheres with uniform measure, and product spaces, $\alpha(\veps)$ decreases exponentially fast with $\veps$.
 
\subsection{Adversarial examples and concentration of measure} 
\label{sec.adv_and_conc}

Several researchers have used the concentration of measure phenomenon to explain the pervasive existence of adversarial examples, demonstrating that, as the dimensionality of input samples increases\footnote{This is the case, for instance, with colour digital images, where the input sample dimensionality corresponds to three times the number of pixels, often reaching tens or even hundreds of thousands of dimensions.} adversarial examples can always be constructed even when $\veps$ is arbitrarily small.

By adopting the PC definition of adversarial examples, Shafahi et al. \cite{shafahiadversarial} prove that given a classification problem with $m \ge 2$ classes, each distributed over the unit hypercube $[0, 1]^n$ with density functions $\{ \mu_i\}_{i=1}^m$, for any $f$ that partitions the hypercube into disjoint measurable subsets, given a sample $x$ belonging to class $i$, one of the following holds:
\begin{itemize}
    \item $x$ is misclassified by $f$
    \item $x$ has an adversarial example $x'$, with $\|x- x'\|_2 \le \veps$
\end{itemize}
with probability\footnote{The inequality below is derived directly from the proof of Theorem 2 in \cite{shafahiadversarial} (appendix A), and corrects a typo in the statement of the theorem.}
\begin{equation}
    P_{e,adv} \ge 1-U_i \frac{e^{-\pi \varepsilon^2}}{2 \pi \varepsilon}
\label{eq.asrShafahi}  
\end{equation}
where $U_i$ is the supremum of $\mu_i$.
The only condition behind this result is  that the fraction $\FF_i$ of the hypercube assigned to class $i$ by $f$ 
has a volume smaller than 1/2, a condition that is reasonably met by any classifier with more than 2 classes. When the input sample $x$ corresponds to an image, it is convenient to express the magnitude $\varepsilon$ of the perturbation in terms of mean square error $\gamma$, that is:
\begin{equation}
	\frac{\veps^2}{n} = \gamma,
\label{eq.boundmse}  
\end{equation}
allowing us to reformulate \eqref{eq.asrShafahi} as:
\begin{equation}
	P_{e,adv} \ge 1-U_i \frac{e^{-\pi n \gamma}}{2 \pi \sqrt{n \gamma}}.  
\label{eq.asrmse}  
\end{equation}
If $U_i$ is bounded, then, as $n$ increases, the probability that $x$ admits an adversarial example (or it is misclassified outright) approaches 1.
The intuition behind the arguments used in \cite{shafahiadversarial} is that due to the concentration of measure phenomenon, for any set occupying less than half of the input space, the distance to the set boundary of most of the points is such that they can be pushed outside the set by introducing an arbitrarily small mean square error.

Another line of research \cite{mahloujifar2019curse, diochnos2018adversarial, dohmatob2019generalized} establishes the inevitability of adversarial examples under the ER definition by introducing a non-empty error region $\EE$ where $f(x)\neq c(x)$. In this setting, if $(\XX,d,\mu)$ is concentrated, a high adversarial risk can be achieved with vanishing mean square distortion as the input dimensionality increases. The intuition is that most correctly classified samples lie close to the boundary of $\EE$ and can therefore be pushed into the error region by increasingly small perturbations.

\subsection{Limits of theory and goals of the paper}
\label{sub.limits}

A natural question is whether the distributional assumptions underlying concentration-based inevitability results are satisfied in practice. All these results rely, in one form or another, on the assumption that the data to be classified live in sufficiently regular metric spaces, and that the probability mass of each class is smoothly spread over these spaces, so that the corresponding concentration function decays rapidly with the input dimensionality~$n$.
By contrast, if most of the probability mass associated with one or more classes is confined to small regions, and if such regions are sufficiently well separated, one may reasonably expect the existence of robust classifiers. This intuition is formalized in~\cite{pal2023avoidable} through the introduction of $(a,b)$-localized and $(a,b,c)$-strongly localized distributions%
\footnote{In~\cite{pal2023avoidable} the terms \emph{concentrated} and \emph{strongly concentrated} distributions are used. To avoid any ambiguity with metric concentration phenomena in high-dimensional spaces, we instead adopt the terms \emph{localized} and \emph{strongly localized} distributions.}. 
Specifically, a distribution~$\mu$ over ~$\XX$ is said to be $(a,b)$-localized if there exists a subset~$S \subset \XX$ such that $\mu(S) \ge 1 - b$ and $\mathrm{Vol}(S) \le C e^{-a n}$ for some constant~$C$, where $\mathrm{Vol}(\cdot)$ denotes a volume measure over~$\XX$. 
Theorem~2.1 in~\cite{pal2023avoidable} states that a necessary condition for the existence of a classifier satisfying $\texttt{Risk}_a \le 1 - b$ is that $\mu_i$ is $(a,b)$-localized for at least one class~$i$.
Localization alone does not guarantee robustness; however, it invalidates the assumptions under which concentration-based inevitability results are derived. This limitation is particularly evident in Shafahi et al.~\cite{shafahiadversarial}. From the definition of an $(a,b)$-localized distribution, it follows that the supremum of~$\mu_i$ - denoted as~$U_i$ in~\eqref{eq.asrmse} - grows exponentially with~$n$, causing the bound in~\eqref{eq.asrmse} to become vacuous for appropriate choices of~$\gamma$, $a$, and~$b$.

The first goal of this paper stems from the observation that, when localization conditions are met, the assumptions underpinning concentration-based impossibility results are violated and theory alone becomes inconclusive with respect to the inevitability of adversarial examples. This motivates a direct empirical investigation aimed at assessing whether the distribution of natural images exhibits mass localization.

Localization of data distribution constitutes only a necessary condition for the existence of a robust classifier, as it does not account for the separation between the distributions of different classes. This limitation motivates the notion of \emph{strongly localized} distributions, which additionally require class separation and provide a sufficient condition for robustness~\cite{pal2023avoidable}). In practice, however, assessing whether the class-conditional distributions of natural images satisfy strong localization properties is generally an intractable problem for real-world image distributions. This observation leads to the second goal goal of our work: to evaluate the actual impact of input dimensionality on the emergence of adversarial examples in practical settings.

\section{Experimental Setting}
\label{sec::experimental_setup}

In this section, we describe the experimental setting used throughout our work. 

\subsection{Hierarchical datasets}
\label{subsec:datasets}
As we said, most empirical studies on adversarial robustness and concentration-based theories rely on very low-resolution benchmarks such as MNIST ($28 \times 28$) and CIFAR-10 ($32 \times 32$), while investigations across input dimensions often compare datasets with different semantic content, making it difficult to isolate the effect of dimensionality, or consider artificial scaling as done with MNIST in~\cite{shafahiadversarial}.
To overcome these limitations, we built three hierarchical datasets containing images with identical semantic content at different resolutions. Rather than starting from small images and artificially constructing higher-resolution inputs, as done in~\cite{shafahiadversarial} for MNIST via pixel duplication, we started from high-resolution images and generated lower-resolution versions by downsampling. 

We selected datasets characterized by a large native resolution enabling consistent downsampling from the highest resolution to the lowest. Specifically, starting from the images at original resolution, we built downscaled versions of each dataset at $64 \times 64$, $128 \times 128$, $256 \times 256$ and $400 \times 400$ resolutions\footnote{In the rest of the paper we refer to these dimensions as resolutions 64, 128, 256, and 400.}, by using bilinear interpolation to resize the images to the target size, and scaling pixels by a constant factor of 255 so to map their value in the [0, 1] range.

\paragraph{VegSeed}
The VegSeed dataset was derived from the original VegSeedsBD dataset~\cite{vegseedsbd}, which comprises 4,500 high-resolution JPEG images (3,456 $\times$ 4,608 pixels) evenly distributed across 15 vegetable seed categories - each containing 300 images - and used for single-label, multiclass image classification.
In its original form, the data was partitioned into two distinct subsets based on the capture method: individual seeds and bulk samples. 
In our version, we merged these subsets, removing the structural distinction between single and bulk images while maintaining the original 15 classes, distribution, and task.

\paragraph{Food-30}
The Food-30 dataset was derived from Food-101~\cite{food101}, a widely used benchmark in computer vision designed for single-label multiclass food image classification. Food-101 contains 101,000 real-world images divided evenly across 101 food categories, with 1000 images per class at varying resolutions, and a predefined split of 750 training images and 250 test images per class.
From the original dataset, we selected 30 classes and sampled 500 images per class, for a total of 15,000 images with resolution 512 $\times$ 512.

\paragraph{ResynthDB}
We built this dataset starting from the Resynthesis Dataset\footnote{The dataset is publicly available at \url{https://www.kaggle.com/datasets/pietrob92/resynthesis-dataset?select=resynthesis_dataset}.} ~\cite{sourceattribution} 
, which contains images generated by 10 generators (\textit{Bing, Firefly, Flux-dev, Freepik, Imagen, Leonardoai, Midjourney, Nightcafé, Stabilityai,} and \textit{Starryai}) each acting as a class in a single-label, multiclass prediction task.
Each generative model was provided with 100 textual prompts to produce the image set. While most classes consist of 1,100 samples, the Bing and Firefly categories contain 4,247 and 4,364 images, respectively.
The resulting dataset contains 17,411 images with resolution 1024 $\times$ 1024.

Table \ref{tab:datasets} summarizes the main characteristics of the datasets we have built. 
\begin{figure}
    \centering
    \includegraphics[width=\columnwidth]{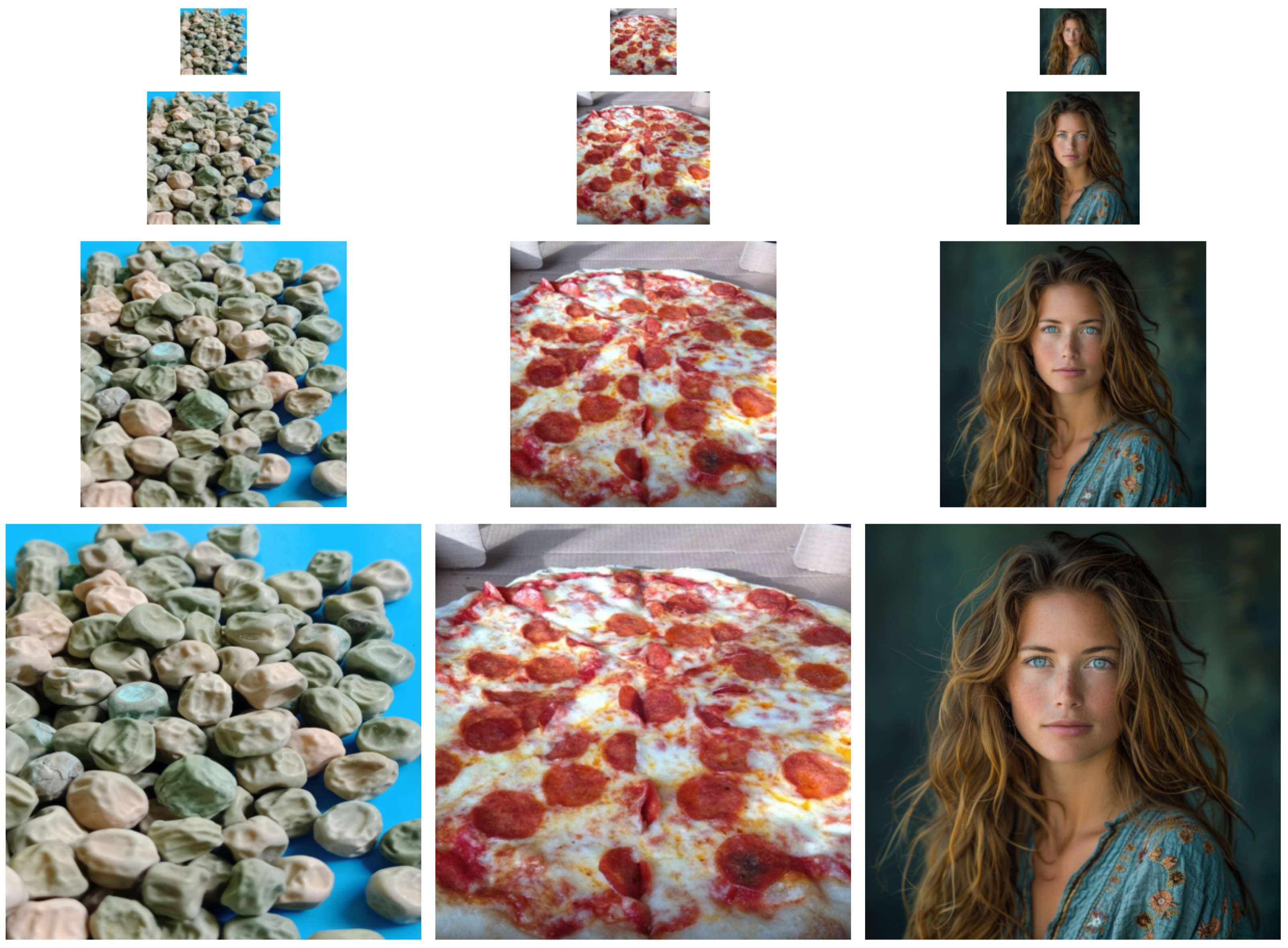}
    \caption{Examples of images from VegSeed, Food-30, and ResynthDB datasets (columns, from the left) at the considered input resolutions 64, 128, 256, and 400 (rows, from the top).}
    \label{fig::samples_from_datasets}
\end{figure}
\vspace{-0.1cm}
An example of the images contained in the hierarchical datasets we have built is provided in Fig. \ref{fig::samples_from_datasets}.
\begin{table}
\centering
\caption{Hierarchical datasets description.}
\label{tab:datasets}
\begin{tabular}{lccc}
\toprule
    & \textbf{VegSeed} & \textbf{Food-30} & \textbf{ResynthDB} \\
    \midrule
    \textbf{Classes} & 15 & 30 & 10 \\ 
    \textbf{No. samples} & 4,500 & 15,000 & 17,411 \\
    \textbf{Samples per class} & 300 & 500 & variable\\
    \textbf{Balanced} & yes & yes & no \\
    \textbf{Original resolution} & 3,456$\times$4,608 & 512$\times$512 & 1,024$\times$1,024 \\
\bottomrule
\end{tabular}
\end{table}
\vspace{-0.1cm}

All datasets were split into training, validation, and test sets, with a ratio of 80\%, 10\%, and 10\% of the samples, respectively. The same split was applied consistently across all resolutions. 
To account for class imbalance, we used class-weighted loss during training. 

\subsection{Architectures}
\label{sec:allmodels}

Since the theoretical results discussed in the previous sections are stated for \emph{any} classifier, it is crucial that the empirical analysis is carried out across a broad and heterogeneous set of models. For this reason, our experimental campaign includes multiple state-of-the-art classification backbones as well as several robust strategies representative of modern defenses against adversarial examples. This allows us to assess whether the observed trends with input dimensionality persist not only for a few standard models, but also for classifiers explicitly optimized for adversarial robustness.

With the above considerations in mind, we evaluated four backbone architectures: MobileNetV3-Small~\cite{howard2019searching} (hereafter \emph{MobileNet}), EfficientNetV2-Small~\cite{tan2019efficientnet} (hereafter \emph{EfficientNet}), ResNet-50~\cite{he2016deep}, and a hybrid CLIP+MLP model, in which a MultiLayer Perceptron (MLP) classifier is paired with a CLIP~\cite{radford2021learning} Vision Transformer backbone for feature extraction.
All CNN-based models were pre-trained on the ImageNet dataset~\cite{imagenet} and then fine-tuned on the hierarchical datasets.
In contrast, the CLIP + MLP architecture consists of a frozen pre-trained CLIP image encoder from OpenAI - originally trained on the WebImageText (WIT) dataset~\cite{radford2021learning} - followed by a randomly initialized MLP for the final classification task. 
In this case, the dataset construction 
is followed by an additional procedure based on the pre-trained CLIP parameters, which consists of a normalization with specific mean and standard deviation values, and a resizing transformation into a $224 \times 224$ pixels image (by means of a bilinear interpolation), to ensure the input images match the specifications of the CLIP vision encoder.

For each architecture, we first trained a \emph{Standard} model using only clean, unperturbed images. We then used these models as starting points to apply state-of-the-art defenses.
As a first defense, we trained a \emph{Robust} model via adversarial training.
The Robust model is initialized from the corresponding Standard model and subsequently fine-tuned using both clean and adversarially perturbed data generated via a PGD$_{\ell_2}$ attack~\cite{madry2017towards}.
In the experiments, the attack is performed with large maximum allowed perturbation $\veps$ and small step size  $\alpha$, and the optimization is stopped as soon as a misclassification is induced at an iteration $s' \leq s$, so as to obtain the minimum distortion required to deceive the model.\footnote{Note that, by adopting a large $\veps$ as we are doing, the projection becomes ineffective and PGD boils down to a standard iterative gradient ascent attack (PGD without projection). For simplicity, in the following, we refer to the attack as PGD with a slight abuse of terminology.}
In the training procedure, we set $\veps=1000, s=200, \alpha=0.01$, to balance between positive and negative training examples. 
In the evaluation procedure we used a higher $s=5000$ and also optimized $\alpha$ to ensure the attack reached a predefined efficacy threshold while maintaining minimal perturbation overhead.

In addition to adversarial training, we evaluated two defense mechanisms based on input pre-processing, namely JPEG compression and AI-based denoising~\cite{zuo2018convolutional}. In this case, models are trained on pre-processed clean images. At test time, inputs are pre-processed before being fed to the classifier, under the assumption that the pre-processing step can remove the subtle perturbations introduced by adversarial attacks.

Overall, our experimental campaign spans 4 backbone architectures, 4 training/defense regimes, and 4 input resolutions, resulting in a total of 
64 trained models per dataset. Across the three datasets considered in this work, this amounts to 192 trained networks.
Note that each model was trained and evaluated using images at a single resolution only, ensuring that no model was exposed to multiple resolutions during training or testing.

\subsection{Metrics}
The robustness of the various models and the effectiveness of the attacks have been evaluated by considering three complementary metrics characterizing the performance on clean data and robustness under adversarial perturbations. Let $\DD = \{x_i\}_{i=1}^N$ be the evaluation dataset, where $x_i \in \left[0,1 \right]^n$ is an input sample. Let $c, f$ defined as in Section \ref{sec.notanddef}.
 
\paragraph{Standard Accuracy (SAcc)}
The Standard accuracy measures the fraction of samples in $\DD$ that are correctly classified in the absence of attacks:
\begin{equation}
\label{eq::sacc}
\mathrm{SAcc} = \frac{1}{N}\sum_{i=1}^{N}\mathbf{1}\left[ f(x_i) = c(x_i) \right].
\end{equation}
where $\mathbf{1}$ denotes the usual indicator function. 

\paragraph{Attack Success Rate (ASR)}
To disentangle the attack effectiveness from the classification errors in the absence of attacks, we restrict the evaluation of the attack success rate to the subset of samples that are correctly classified on clean data, that is, samples belonging to the set $\DD_{corr} = \left\{ x_i : f(x_i) = c(x_i) \right\}$, with cardinality $|\DD_{corr}| = N \cdot \mathrm{SAcc}$. \\

By indicating with $x_i'$ the result of an attack, the Attack Success Rate (ASR) is defined as the proportion of initially correct samples that are misclassified after the attack, that is:
\begin{equation}
\label{eq::asr}
\mathrm{ASR}
= \frac{1}{|\DD_{corr}|} \sum_{x_i \in \DD_{\text{corr}}} \mathbf{1}\left[ f(x_i') \neq c(x_i) \right].
\end{equation}

\paragraph{Adversarial Accuracy (AAcc)}
Adversarial accuracy measures the fraction of samples in $\DD$ that are correctly classified after the attack, where the attack is applied only to the subset 
$D_{corr}$ of correctly classified samples:
\begin{equation}
\label{eq::aacc}
\mathrm{AAcc} = \frac{1}{N} \sum_{x_i \in \DD_{corr}} \mathbf{1}\left[ f(x_i') = c(x_i) \right].
\end{equation}

By construction, Eq. (\ref{eq::asr}) quantifies the fraction of correctly classified samples whose predictions are altered by the attack, highlighting how vulnerable is the model, while Eq. (\ref{eq::aacc}) measures the overall fraction of samples correctly classified after the attack.
By referring to the definitions given in Section \ref{sec.notanddef}, AAcc is the empirical counterpart of $1-\texttt{Risk}_\veps$ when the CI definition of adversarial examples is adopted and the attack perturbation is bounded by $\veps$.
It is also easy to see that:
\begin{equation}
\label{eq::aac_asr}
\mathrm{AAcc} = \mathrm{SAcc} \cdot (1 - \mathrm{ASR}),
\end{equation}

highlighting how robustness in the presence of attacks depends jointly on the model’s accuracy on clean samples and its robustness against adversarial perturbations.

\paragraph{Distortion (MSE)}
To compare a generic clean image $x_i$ and its perturbed version $x_i'$, we used the Mean Squared Error: $\mathrm{MSE}(x_i, \ x_i') = \frac{1}{n} \sum_{j=1}^{n} \left( x_{i,j} - x_{i,j}' \right)^2$, where $x_{i,j}$ indicates the $j$-th pixel of $x_i$. MSE provides a quantitative measure of how much the perturbed image differs from the original at the pixel level.

\section{Localization Properties of Hierarchical Datasets}
\label{sec:mass_localization}

In this section, we investigate whether the class-conditional distributions associated with the hierarchical datasets introduced in Section~\ref{subsec:datasets} exhibit \emph{mass localization} in the sense of Pal et al.~\cite{pal2023avoidable}. In particular, we empirically assess whether, as input dimensionality increases, the probability mass of the various classes concentrates within subsets of exponentially small volume, as required by the $(a,b)$-localization condition.

To this aim, let $\DD_c = \{ x^c_i \in \DD \mid i = 1 \dots N_c \}$ be a dataset containing representative examples $x_i^c$ belonging to class $c$. To test whether the definition of $(a,b)$-localized distribution is satisfied, we estimate the volume of a bounding box (playing the role of the set $S$ in the definition of $(a,b)$-localization) containing all the samples in $\DD_c$. To avoid overestimating the volume of the bounding box, we align it with the centered principal component directions of $\DD_c$. Specifically, let $v^c_i$ denote the centered principal component coordinates of $x^c_i$. The volume of the resulting bounding box is
\begin{equation}
    \text{Vol}(Box_c) = \prod_{j = 1}^{n} \left[ \max\limits_{i=1 \dots N_c} v^c_{i,j} - \min\limits_{i=1 \dots N_c} v^c_{i,j} \right],
\label{eq.volBB}
\end{equation}
by letting
\begin{equation}
    \lambda_c = -\frac{\ln{\text{Vol}(Box_c) }}{n},
\end{equation}
we can say that $\DD_c$ is empirically $(a,b)$-localized with $a = \lambda_c$, $b = 0$ and $C = 1$\footnote{Considering the case $b = 0$ is a conservative estimate, as it provides an upper bound on the volume of the smallest set capturing most of the class probability mass.}.  

Motivated by the above arguments, we estimated $\lambda_c$ for the classes of the three hierarchical datasets VegSeed, Food-30, and ResynthDB. A practical challenge arises because estimating the volume of the $n$-dimensional bounding-box in PCA coordinates becomes ill-posed, with zero degenerate volume, when the sample covariance is rank-deficient. This limitation becomes critical at higher resolutions. For instance, for $96 \times 96$ images we have $n=27,648$, requiring at least an equivalent number of samples per class in order to obtain a full-rank covariance matrix, a requirement that far exceeds the size of the classes of the original datasets.
To circumvent this limitation, in this section we consider an additional reduced set of input resolutions, namely $32 \times 32$, $48 \times 48$, $64 \times 64$, and $96 \times 96$, and we apply an extensive data augmentation strategy in order to ensure that $N > n$.\footnote{The reduced resolutions used in this section are introduced solely to make the estimation of $Vol(Box_c)$ feasible in high dimension. They should not be confused with the main experimental resolutions (64, 128, 256, and 400) used in the subsequent sections to evaluate adversarial vulnerability.}
Specifically, we generated at least 30,000 images per class by applying a tailored number of semantic-preserving augmentations to each original sample (60 for Food-30, 100 for VegSeed, and 36 for ResynthDB). The augmentation pipeline \footnote{Available at \url{https://github.com/YaserGholizade/image_augmenter}} combines geometric and visual transformations, including flips, rotations, crop-and-zoom, filtering, and JPEG compression, to increase variability while preserving class identity.
It is important to remark that, in this context, augmentation is used primarily as a numerical device to mitigate rank-deficiency and enable stable volume estimation, rather than as an attempt to faithfully sample the underlying (unknown) class distribution.

Table~\ref{tab:bbox_rates} summarizes the results obtained by applying the above procedure to all classes and all considered resolutions. For each dataset and each resolution, we report the minimum, median, maximum, and average values of $\lambda_c$ across classes. As can be seen, $\lambda_c$ takes non-negligible values for all classes and all resolutions, providing strong empirical evidence in favor of mass localization. Moreover, $\lambda_c$ consistently increases with $n$, suggesting that the volume occupied by image classes decays at least exponentially fast with the ambient dimension, and possibly at a super-exponential rate. For instance, the average $\lambda_c$ for VegSeed rises from $0.80$ at $32\times32$ to $2.91$ at $96\times96$. 

To further evaluate the localization of classes with respect to the ambient space defined by all images in the dataset, we applied the same analysis to the entire dataset rather than to individual classes. However, computing the log volume over the full dataset is computationally prohibitive due to the size of the aggregated augmented data (e.g., approximately 450k images for VegSeed, 900k for Food-30, and 350k for ResynthDB). To address this issue, we implemented a bootstrapping approach to approximate the global PCA. Specifically, we randomly sampled subsets of 35,000 images from the full datasets, computed the log volume for each subset, and repeated this process 10 times. The reported value is the average log volume across these trials. We also examined the standard deviation across bootstrap runs (ranging
between 0.005 and 0.015) to confirm the stability of the metric and the absence of significant sampling effects. Even when considering the relative volume of class-specific bounding boxes with respect to the volume of the overall dataset bounding box, the localization property of the datasets is confirmed, with stronger localization for larger input sizes.

\begin{table}[ht]
\centering
\caption{Estimated $\lambda_c$ values for representative classes (Min/Median/Max volume), average class value, and the whole dataset across resolutions 32, 48, 64, and 96.}
\label{tab:bbox_rates}
\begin{tabular}{llcccc}
\toprule
\textbf{Dataset} & \textbf{Class / Type} & \textbf{32} & \textbf{48} & \textbf{64} & \textbf{96} \\
\midrule
\multirow{5}{*}{VegSeed}
 & Sponge Gourd      & 0.92 & 1.46 & 1.95 & 2.97 \\
 & Bottle Gourd   & 0.83 & 1.33 & 1.77 & 2.80 \\
 & Pumpkin           & 0.67 & 1.20 & 1.70 & 2.77 \\
 & \textbf{Average}        & \textbf{0.80} & \textbf{1.34} & \textbf{1.82} & \textbf{2.91} \\
 & Whole Dataset & 0.43 & 0.90 & 1.35 & 2.31 \\
\midrule
\multirow{5}{*}{Food-30} 
 & Beignets         & 1.01 & 1.40 & 1.75 & 2.44 \\
 & Macarons       & 0.79 & 1.18 & 1.55 & 2.29 \\
 & Pizza             & 0.66 & 0.99 & 1.34 & 2.09 \\
 & \textbf{Average}        & \textbf{0.82} & \textbf{1.17} & \textbf{1.53} & \textbf{2.27} \\
 & Whole Dataset  & 0.65 & 0.96 & 1.26 & 1.86 \\
\midrule
\multirow{5}{*}{ResynthDB} 
 & LeonardoAI       & 1.43 & 1.83 & 2.19 & 2.87 \\
 & Firefly        & 1.25 & 1.65 & 2.01 & 2.69 \\
 & StarryAI       & 1.18 & 1.59 & 1.96 & 2.66 \\
 & \textbf{Average}        & \textbf{1.28} & \textbf{1.66} & \textbf{2.02} & \textbf{2.70} \\
 & Whole Dataset  & 1.17 & 1.53 & 1.87 & 2.46 \\
\bottomrule
\end{tabular}
\end{table}

While Theorem~2.1 in~\cite{pal2023avoidable} already states that $(a,b)$-localization is a necessary condition for adversarial examples to be avoidable, it is instructive to interpret the estimates derived from our experiments in the context of the inevitability results of Shafahi et al.~\cite{shafahiadversarial}, summarized in Eq.~\eqref{eq.asrmse}.  
By assuming that the class distribution admits a density supported within the PCA-aligned bounding box, we obtain $U_c \ge 1/\text{Vol}(Box_c)$, and Eq.~\eqref{eq.asrmse} becomes
\begin{equation}
	P_{e,adv} \ge 1- \frac{1}{2\pi \sqrt{n \gamma}} e^{-n (\pi \gamma - \lambda_c)}.  
\label{eq.asrmse_BB}  
\end{equation}
Typical values of $\gamma$ (see Section~\ref{sec::experimental_results}) range from $10^{-4}$ to $10^{-6}$ (or even smaller for harder classification tasks such as source attribution). When coupled with the $\lambda_c$ values found in our experiments, these values render the right-hand side of Eq.~\eqref{eq.asrmse_BB} smaller than zero, thus resulting in a meaningless prediction.

\section{Emergence of Adversarial Examples vs Input Dimensionality: Experimental Analysis}
\label{sec::experimental_results}

Having established that natural image classes exhibit strong mass localization, the assumptions behind concentration-based inevitability results are no longer satisfied and theory alone becomes inconclusive. We therefore turn to an empirical investigation of what happens in practice when the input dimensionality increases. 

Clearly, evaluating robustness for \emph{any} possible classifier as predicted by theory is infeasible, for this reason, for each task (dataset) and each resolution, we attacked the full set of models described in Section~\ref{sec:allmodels}, resulting in 16 trained models per resolution. For each model, we generated adversarial examples using a PGD attack.
The attack hyperparameters, most notably the perturbation budget $\varepsilon$ and the step size $\alpha$, were tuned so as to obtain the smallest distortion required to successfully attack each input. In this way, for every model we obtained a per-sample estimate of the minimum MSE distortion required for a successful attack.
To summarize robustness in a way that is comparable across models and resolutions, we first fixed an ASR of 90\% and computed the minimum distortion required to achieve it. Specifically, for each model we collected the per-sample minimum MSE values and defined MSE$_{90}$ as the smallest MSE threshold such that at least 90\% of the samples could be successfully attacked with a distortion not exceeding this value.
Tables \ref{tab:mse_at_asr90_seed_untargeted}, \ref{tab:mse_at_asr90_food_untargeted}, and \ref{tab:mse_at_asr90_sourceattribution_untargeted} report the value of MSE$_{90}$ for all datasets, models and input size.
Based on the results in the tables, ResNet50 with adversarial training turns out to be the most robust model at all resolutions for the VegSeed and Food-30 datasets, with the exception of VegSeed at resolution 64 for which the most robust classifier is EfficientNet with adversarial training. 
The situation is slightly different for the ResynthDB task where the most robust model is MobileNet with adversarial training at resolutions 64 and 128, EfficientNet with adversarial training at 256 resolution, and CLIP+MLP with JPEG pre-processing at resolution 400. Noticeably, in all cases but ResynthDB at resolution 400, adversarial training yields the most robust models among the considered defenses. In contrast, the identity of the most robust architecture varies across tasks, suggesting that the choice of the most robust model remains strongly task-dependent.

\begin{table}
\centering
\caption{MSE$_{90}$ values for all the attacked models and defenses on the VegSeed dataset at various resolutions. The most robust model at each resolution is highlighted in bold.}
\label{tab:mse_at_asr90_seed_untargeted}
\begin{tabular}{lccccc}
\toprule
\textbf{Model} & \textbf{Defense} & \textbf{64} & \textbf{128} & \textbf{256} & \textbf{400} \\
\midrule
\textbf{MobileNet}      & -          & 1.61e-5 & 1.37e-5 & 2.29e-6 & 2.30e-6 \\
\textbf{ResNet50}       & -          & 4.80e-5 & 2.07e-5 & 1.89e-5 & 4.32e-5 \\
\textbf{EfficientNet}   & -          & 2.83e-5 & 1.29e-5 & 6.06e-6 & 2.85e-6 \\
\textbf{CLIP+MLP}       & -          & 1.74e-6 & 9.02e-7 & 3.66e-7 & 1.73e-7 \\
\midrule
\textbf{MobileNet}      & JPEG       & 1.97e-5 & 6.79e-5 & 1.11e-5 & 5.04e-6 \\
\textbf{ResNet50}       & JPEG       & 6.51e-5 & 5.14e-5 & 2.92e-5 & 2.18e-5 \\
\textbf{EfficientNet}   & JPEG       & 2.94e-5 & 1.45e-5 & 9.77e-6 & 6.63e-6 \\
\textbf{CLIP+MLP}       & JPEG       & 2.11e-6 & 1.05e-6 & 5.28e-7 & 3.23e-7 \\
\midrule
\textbf{MobileNet}      & Denoise    & 3.04e-5 & 2.37e-5 & 6.71e-6 & 5.76e-6 \\
\textbf{ResNet50}       & Denoise    & 7.11e-5 & 3.06e-5 & 2.37e-5 & 1.76e-5 \\
\textbf{EfficientNet}   & Denoise    & 5.94e-5 & 2.19e-5 & 1.21e-5 & 4.82e-6 \\
\textbf{CLIP+MLP}       & Denoise    & 1.97e-5 & 1.00e-5 & 3.64e-6 & 1.71e-6 \\
\midrule
\textbf{MobileNet}      & Adv Train  & 9.14e-5 & 3.02e-5 & 2.05e-5 & 1.84e-5 \\
\textbf{ResNet50}       & Adv Train  & 2.42e-4 & \textbf{1.00e-4} &\textbf{ 7.36e-5} & \textbf{4.32e-5} \\
\textbf{EfficientNet}   & Adv Train  & \textbf{6.67e-4} & 6.78e-5 & 4.69e-5 & 2.19e-5 \\
\textbf{CLIP+MLP}       & Adv Train  & 4.54e-6 & 2.48e-6 & 1.37e-6 & 7.99e-7 \\
\bottomrule
\end{tabular}
\end{table}

\begin{table}
\centering
\caption{MSE$_{90}$ values for all the attacked models and defenses on the Food-30 dataset at various resolutions. The most robust model at each resolution is highlighted in bold.}
\label{tab:mse_at_asr90_food_untargeted}
\begin{tabular}{lccccc}
\toprule
\textbf{Model} & \textbf{Defense} & \textbf{64} & \textbf{128} & \textbf{256} & \textbf{400} \\
\midrule
\textbf{MobileNet}      & -          & 1.25e-5 & 3.01e-6 & 9.45e-7 & 4.49e-7 \\
\textbf{ResNet50}       & -          & 1.35e-5 & 4.86e-6 & 2.08e-6 & 1.72e-6 \\
\textbf{EfficientNet}   & -          & 1.15e-5 & 4.08e-6 & 1.07e-6 & 8.45e-7 \\
\textbf{CLIP+MLP}       & -          & 3.99e-6 & 2.01e-6 & 1.11e-6 & 6.12e-7 \\
\midrule
\textbf{MobileNet}      & JPEG       & 1.68e-5 & 5.63e-6 & 1.44e-6 & 7.41e-7 \\
\textbf{ResNet50}       & JPEG       & 1.86e-5 & 7.83e-6 & 2.61e-6 & 1.21e-6 \\
\textbf{EfficientNet}   & JPEG       & 1.05e-5 & 3.69e-6 & 1.14e-6 & 6.37e-7 \\
\textbf{CLIP+MLP}       & JPEG       & 2.00e-6 & 8.83e-7 & 6.23e-7 & 4.05e-7 \\
\midrule
\textbf{MobileNet}      & Denoise    & 1.83e-5 & 5.79e-6 & 2.01e-6 & 1.05e-6 \\
\textbf{ResNet50}       & Denoise    & 1.92e-5 & 8.26e-6 & 3.91e-6 & 2.63e-6 \\
\textbf{EfficientNet}   & Denoise    & 1.53e-5 & 6.92e-6 & 2.06e-6 & 1.27e-6 \\
\textbf{CLIP+MLP}       & Denoise    & 6.04e-5 & 3.19e-5 & 1.84e-5 & 1.12e-5 \\
\midrule
\textbf{MobileNet}      & Adv Train  & 4.71e-5 & 1.76e-5 & 9.17e-6 & 3.33e-6 \\
\textbf{ResNet50}       & Adv Train  & \textbf{3.16e-4} & \textbf{1.49e-4} & \textbf{8.65e-5} & \textbf{5.68e-5} \\
\textbf{EfficientNet}   & Adv Train  & 3.90e-5 & 1.74e-5 & 4.04e-6 & 2.89e-6 \\
\textbf{CLIP+MLP}       & Adv Train  & 6.58e-6 & 4.49e-6 & 2.30e-6 & 1.33e-6 \\
\bottomrule
\end{tabular}
\end{table}

\begin{table}
\centering
\caption{MSE$_{90}$ values for all the attacked models and defenses on the ResynthDB dataset at various resolutions. The most robust model at each resolution is highlighted in bold.}
\label{tab:mse_at_asr90_sourceattribution_untargeted}
\begin{tabular}{lccccc}
\toprule
\textbf{Model} & \textbf{Defense} & \textbf{64} & \textbf{128} & \textbf{256} & \textbf{400} \\
\midrule
\textbf{MobileNet}      & -          & 7.50e-6 & 9.73e-7 & 8.10e-8 & 8.12e-8 \\
\textbf{ResNet50}       & -          & 6.93e-6 & 1.92e-6 & 8.21e-8 & 5.26e-8 \\
\textbf{EfficientNet}   & -          & 6.36e-6 & 3.52e-6 & 2.27e-6 & 7.58e-7 \\
\textbf{CLIP+MLP}       & -          & 1.29e-6 & 5.15e-7 & 1.27e-7 & 4.94e-8 \\
\midrule
\textbf{MobileNet}      & JPEG       & 1.83e-5 & 1.99e-5 & 6.87e-7 & 4.69e-7 \\
\textbf{ResNet50}       & JPEG       & 1.95e-5 & 4.56e-6 & 1.32e-6 & 3.94e-7 \\
\textbf{EfficientNet}   & JPEG       & 1.70e-5 & 1.15e-5 & 2.31e-6 & 8.19e-7 \\
\textbf{CLIP+MLP}       & JPEG       & 2.27e-5 & 1.35e-5 & 6.37e-6 & \textbf{3.08e-6} \\
\midrule
\textbf{MobileNet}      & Denoise    & 2.27e-5 & 1.62e-6 & 9.16e-7 & 3.54e-7 \\
\textbf{ResNet50}       & Denoise    & 9.39e-6 & 4.09e-6 & 2.89e-7 & 1.33e-7 \\
\textbf{EfficientNet}   & Denoise    & 8.27e-6 & 5.40e-6 & 2.57e-6 & 1.33e-6 \\
\textbf{CLIP+MLP}       & Denoise    & 2.10e-5 & 7.89e-6 & 2.26e-6 & 8.14e-7 \\
\midrule
\textbf{MobileNet}      & Adv Train  & \textbf{7.11e-5} & \textbf{3.75e-5} & 5.01e-6 & 2.08e-7 \\
\textbf{ResNet50}       & Adv Train  & 5.15e-5 & 8.13e-6 & 9.35e-7 & 2.19e-7 \\
\textbf{EfficientNet}   & Adv Train  & 3.46e-5 & 1.42e-5 & \textbf{6.48e-6} & 1.49e-6 \\
\textbf{CLIP+MLP}       & Adv Train  & 2.55e-5 & 1.15e-5 & 2.04e-6 & 9.76e-7 \\
\bottomrule
\end{tabular}
\end{table}

Now that we have identified the most robust model for each task and each input size, we can characterize how robustness scales with input dimensionality by comparing the MSE-ASR trade-off across resolutions. Specifically, Fig. \ref{fig::all_untargeted} reports the ASR for each resolution, as a function of the MSE threshold (equivalently, the MSE required to reach a given target ASR), thus allowing us to assess whether adversarial examples become easier to craft as $n$ increases for any desired operating point, and not only at ASR$=90\%$.
As can be seen, a clear and consistent trend emerges across all three datasets: as the input dimensionality increases, adversarial examples can be crafted with progressively smaller perturbations. This effect is not limited to the reference operating point used to select the most robust model (ASR$=90\%$), but is observed across the full range of attack success rates, as evidenced by the systematic left-shift of the ASR-MSE curves when the image size increases. Quantitatively, the MSE required to achieve ASR$=90\%$ decreases by factors ranging from approximately $6\times$ (Food-30) to over $20\times$ (ResynthDB) when increasing the resolution from $64\times64$ to $400\times400$. Notably, the magnitude of the effect depends on the dataset and is strongest for ResynthDB, suggesting that the dimensionality-driven increase in vulnerability may be amplified in settings where classification relies on weaker cues. 

While ASR quantifies the ease with which adversarial examples can be crafted, a direct comparison with theoretical results requires evaluating the adversarial risk, which combines both vulnerability to adversarial perturbations and baseline classification performance. From an empirical perspective, this corresponds to computing the adversarial accuracy AAcc defined in Eq.~\eqref{eq::aac_asr}, and then deriving the adversarial risk as $\texttt{Risk}_\veps = 1 - \mathrm{AAcc}.$
Under the CI definition of adversarial examples, this relationship is exact. Moreover, it provides a good approximation under the ER and PC definitions when the perturbation magnitude is small and the clean accuracy of the classifier is sufficiently high. 
By exploiting this relationship and the SAcc values reported in the plots of Fig.~\ref{fig::all_untargeted}, we obtain the corresponding adversarial risk curves shown in Fig. \ref{fig::risk_all_untargeted}. By inspecting Fig. \ref{fig::risk_all_untargeted}, we observe that although models operating on lower-dimensional inputs typically exhibit lower clean accuracy, they require substantially larger distortions to reach high adversarial risk. Conversely, at very small perturbation levels higher-dimensional models may retain higher overall accuracy due to their superior clean performance.
This behavior suggests that robustness and standard accuracy may evolve differently as the input dimension increases.

Taken together, our results provide a clear empirical answer to the central question motivating this section. Although the strong mass localization observed in Section~\ref{sec:mass_localization} invalidates the assumptions behind concentration-based inevitability theorems, in practice adversarial vulnerability increases systematically with input dimensionality. In particular, across datasets, architectures, and defenses, higher-resolution inputs consistently require smaller perturbations to achieve a fixed level of ASR. This finding establishes a robust empirical form of curse of dimensionality for adversarial examples, and motivates the second part of the paper, where we investigate whether a similar trend holds under the stronger requirement of targeted adversarial examples.
\begin{figure*}
    \centering
    \includegraphics[width=\textwidth]{images/ASR_vs_mse__all_datasets__asr0.9__untargeted.png}
    \caption{ASR vs. $\log(\text{MSE})$ in the untargeted adversarial attack scenario for the most robust models across datasets.
    Legend values report MSE$_{90}$ for each model.}
    \label{fig::all_untargeted}
\end{figure*}
\begin{figure*}
    \centering
    \includegraphics[width=\textwidth]{images/risk_vs_mse__all_datasets__asr0.9__untargeted.png}
    \caption{Risk vs. $\log(\text{MSE})$ in the untargeted adversarial attack scenario for the most robust models across datasets.}
    \label{fig::risk_all_untargeted}
\end{figure*} 
\section{Targeted Adversarial Examples: Formalization and Theoretical Analysis}
\label{sec:targeted}
We now investigate whether enforcing a specific target label requires significantly larger distortion than untargeted attacks.
\subsection{Targeted adversarial examples}
\label{subsec:targeted_def}
Let $\XX, \YY,$
$c$, and $f$ 
be defined as in Section \ref{sec.notanddef}. We denote by $\CC_i = \{x \in \XX: c(x) = i\}$, $i = 1, \dots, m$ a partition of $\XX$ into $m$ classes, and by $\FF_i = \{x \in \XX: f(x) = i\}$ the classification regions of the classifier $f$. 
We also let $V_i$ be the volume of $\FF_i$. A targeted adversarial example (in the CI sense) with source class $i$ and target class $j$, $i \ne j$, is a perturbed version of a sample $x$ belonging to the $i$-th  class, that is assigned to class $j$ by $f$, in formulas:
\begin{equation}
\label{eq:targeted_attack}
x^{i \rightarrow j} = x + \delta \quad \text{such that } x \in  \CC_i; ~ x^{i \rightarrow j} \in \FF_j,
\end{equation}
where $\delta$ denotes an imperceptible perturbation. Similar definitions can be given for the ER and PC cases. 
The extension of the notion of adversarial risk requires to specify over which source and target sets the probability that an adversarial example can be built is computed, yielding\footnote{Even in this case we adopt the CI definition of adversarial examples; similar definitions can be given in the ER and PC cases.}:

\begin{align}
\nonumber
&\texttt{Risk}^{i \rightarrow j}_\veps = \Pr_{x \sim \mu_i 
} \left\{ \exists x' \in \BB_\veps(x) \cap \XX : f(x') = j \right\}, i \ne j \\ 
&\texttt{Risk}^{j}_\veps = \frac{1}{m-1} \sum_{i : i \ne j} \texttt{Risk}^{i \rightarrow j}_\veps \\ \nonumber
&\texttt{Risk}^{t}_\veps = \frac{1}{m(m-1)} \sum_j \sum_{i : i \ne j} \texttt{Risk}^{i \rightarrow j}_\veps
\label{eq.targeteadvrisk}
\end{align}
where probabilities are computed with respect to the class distributions $\mu_i$.
In the first case, we consider the probability that targeted adversarial examples can be constructed for a specific source-target class pair. 
In the second case, we fix a target class and average over all possible source classes, thus measuring how easy it is to force the classifier toward a given target class regardless of the source class. 
In the third case, we measure the global targeted adversarial risk by averaging uniformly over all possible source-target class pairs.
Note that in the above definitions, we adopt uniform averaging over source and target classes in order to isolate the intrinsic geometric difficulty of targeted perturbations from class-frequency effects. For balanced datasets, this definition coincides with the standard risk averaged according to the data distribution.

\subsection{Targeted attacks and concentration of measure}
\label{subsec:theory extensions}
As discussed in Section \ref{sec.adv_and_conc}, the concentration of measure can be used to explain the emergence of untargeted adversarial examples for data following a concentrated distribution for which $\alpha(\veps)$ decreases fast enough with $n$. To the best of our knowledge no such result exists for the case of targeted attacks. In this section, we fill this gap by extending the analysis carried out in \cite{shafahiadversarial} to the case of targeted adversarial examples.   
The idea behind the extension is the following. We know from \cite{Ledoux1} (Proposition 2.8) that for the uniform measure on the hypercube $[0,1]^n$ equipped with the Euclidean distance, the concentration function satisfies:
\begin{equation}
\alpha(\veps) \le e^{-\pi \veps^2} = e^{-\pi n \gamma}.
\label{eq.alphacube}
\end{equation}
In \cite{shafahiadversarial}, such a property is exploited to prove that for any $\gamma$ and for any class $i$ such that $V_i < 1/2$, when $n$ tends to infinity the expansion of the complement of $\FF_i$ occupies most of the hypercube, and hence the volume of the samples that cannot be moved outside $\FF_i$ shrinks to zero. If the density function of class $i$ is uniformly bounded with respect to $n$, this implies that adversarial examples can be crafted with probability arbitrarily close to 1. The targeted case differs since the goal is to reach a specific region $\FF_j$, whose measure may be smaller than $1/2$. While concentration inequalities are strongest for sets of measure at least $1/2$, they can still be applied to smaller sets by adjusting the expansion radius. We show that this adjustment results in an additional distortion term which vanishes as $n \to \infty$, thereby proving that enforcing a specific target class incurs only a negligible asymptotic penalty compared to the untargeted case. A precise statement and proof of our result are given in the following.
\begin{theorem}{(Targeted adversarial examples)}
Let us consider a classification problem with input samples $x \in [0,1]^n$, ground-truth classes $\CC_i$, $i = 1,\dots,m$, and classification regions $\FF_j$, $j = 1,\dots,m$, associated with a classifier $f$. Let $V_j = \text{Vol}(\FF_j)$ be the volume of region $\FF_j$, and assume that a positive value $V_0$ independent of $n$ exists such that $V_j \ge V_0 > 0~\forall j$.  
Let $\mu_i$ denote the class-conditional distribution over $\CC_i$, and let $U_i < \infty$ be the supremum of $\mu_i$.
Then, given a sample $x \in \CC_i$, for any pair $i \neq j$, either with probability
\begin{equation}
P_{e,adv}^{i \rightarrow j} \ge 1 - U_i e^{-\pi n \gamma}
\label{eq.Pij}
\end{equation}
sample $x$ is misclassified as class $j$ ($x\in \FF_j$) or a targeted adversarial example $x^{i \rightarrow j}$ exists such that 
\begin{equation}
\frac{\|x - x^{i \rightarrow j} \|^2}{n} \le \gamma^{i \rightarrow j},
\label{eq.extradist}
\end{equation}
with
\begin{equation}
\lim_{n \rightarrow \infty} \gamma^{i \rightarrow j} = \gamma.
\end{equation}
\end{theorem}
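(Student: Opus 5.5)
The plan is to use the extension of the concentration inequality to sets of arbitrary measure \cite{Ledoux1}. It follows from \eqref{eq.alphacube} by a two-step expansion argument. If $\mu(A)\ge V_0$ with $V_0<1/2$, first pick a radius $r_0$ such that $\mu(A_{r_0})\ge 1/2$. Such an $r_0$ exists because otherwise $B=(A_{r_0})^c$ would satisfy $\mu(B)\ge 1/2$ while $B_{r_0}$ misses $A$, which gives $V_0\le \mu((B_{r_0})^c)\le e^{-\pi r_0^2}$. Hence $r_0=\sqrt{\ln(1/V_0)/\pi}$ suffices. The set $A_{r_0}$ has measure at least $1/2$, so \eqref{eq.alphacube} applies to it: for every $\veps>0$,
\[
1-\mu\bigl(A_{r_0+\veps}\bigr)\le 1-\mu\bigl((A_{r_0})_{\veps}\bigr)\le e^{-\pi\veps^2}.
\]
We apply this with $A=\FF_j$ and $\mu$ the uniform (Lebesgue) measure on $[0,1]^n$. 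Then $\mu(\FF_j)=V_j\ge V_0$, and $V_0$ does not depend on $n$.

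Next set $\veps^2=n\gamma$ and define the targeted radius $\veps^{i\to j}=r_0+\sqrt{n\gamma}$, i.e.
\[
\gamma^{i\to j}=\frac{\bigl(\sqrt{n\gamma}+\sqrt{\ln(1/V_0)/\pi}\bigr)^2}{n}=\gamma+2\sqrt{\frac{\gamma\ln(1/V_0)}{\pi n}}+\frac{\ln(1/V_0)}{\pi n},
\]
which tends to $\gamma$ as $n\to\infty$. This is the required limit, and the extra cost is the additive term of order $n^{-1/2}$. By construction, every $x\in(\FF_j)_{\veps^{i\to j}}$ either lies in $\FF_j$ (misclassified as $j$) or has a point $x'\in\FF_j$ with $\|x-x'\|\le\veps^{i\to j}$. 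Such an $x'$ is a targeted adversarial example with $\|x-x'\|^2/n\le\gamma^{i\to j}$.

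It remains to pass from Lebesgue measure to $\mu_i$. Let $E=[0,1]^n\setminus(\FF_j)_{\veps^{i\to j}}$ be the bad set. Its volume is at most $e^{-\pi n\gamma}$, and the density of $\mu_i$ is bounded by $U_i$, so $\mu_i(E)\le U_i\,\mathrm{Vol}(E)\le U_i e^{-\pi n\gamma}$. Taking complements gives \eqref{eq.Pij}. Note that no $1/(2\pi\veps)$ factor appears here, unlike \eqref{eq.asrShafahi}, because we use the Gaussian-type bound \eqref{eq.alphacube} directly rather than the refined one used in \cite{shafahiadversarial}.

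The main obstacle is the first step: deriving a usable expansion bound for sets of measure below $1/2$ from a statement about sets of measure at least $1/2$. It is also essential that the resulting extra radius $r_0$ depends only on $V_0$ and not on $n$. If the self-contained complement argument turns out to need boundary care, for instance closedness of $A_{r_0}$ or measurability, I would instead cite the standard form $\mu(A_{r_0+\veps})\ge 1-e^{-\pi\veps^2}$ whenever $\mu(A)\ge e^{-\pi r_0^2}$, which appears in \cite{Ledoux1} in the discussion following Proposition 1.3.
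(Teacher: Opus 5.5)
Your proposal is correct and follows the paper's proof. The paper gets $1-\mathrm{Vol}((\FF_j)_{\veps+\veps_0})\le e^{-\pi\veps^2}$ by citing Lemma 1.1 of \cite{Ledoux1} under $\alpha(\veps_0)<V_0$, and your two-step expansion is exactly that lemma proved inline; it then bounds the $\mu_i$-mass of the safe set by $U_i e^{-\pi n\gamma}$ just as you do. One small correction: with $r_0=\sqrt{\ln(1/V_0)/\pi}$ exactly, your contradiction step only yields $V_0\le e^{-\pi r_0^2}=V_0$, which is no contradiction. So take $r_0$ strictly larger, as the paper does by requiring $\veps_0>\sqrt{\ln(1/V_0)/\pi}$ and then using twice that value; this leaves $\gamma^{i\to j}\to\gamma$ unchanged.
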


\begin{proof}
Let us fix a target class $j$, and choose $\veps_0$ such that
\begin{equation}
\alpha(\veps_0) < V_0,
\label{eq.extraledoux}
\end{equation}
so that in particular $\alpha(\veps_0) < V_j$. 
By applying Lemma 1.1 in \cite{Ledoux1} to $\FF_j$ with the uniform measure over the hypercube, and exploiting the  concentration bound for the hypercube equipped with the Euclidean metric, and given that by hypotheses $\text{Vol}(\FF_j)=V_j \ge V_0>0$, we obtain
\begin{equation}
1 - \text{Vol}((\FF_j)_{\veps + \veps_0}) \le \alpha(\veps) \le e^{-\pi \varepsilon^2}.
\label{eq.lemma11a}
\end{equation}
Let $\mathcal{S}_j := [0,1]^n \setminus (\FF_j)_{\varepsilon+\varepsilon_0}$
denote the set of {\em safe} points whose distance from $\FF_j$ exceeds $\varepsilon+\varepsilon_0$. 
By \eqref{eq.lemma11a}, $\text{Vol}(\mathcal{S}_j)\le e^{-\pi \varepsilon^2}$. 
Since the class-conditional distribution $\mu_i$ is bounded by $U_i$, it follows that
\[
\mu_i(\mathcal{S}_j)\le U_i \cdot \text{Vol}(\mathcal{S}_j)\le U_i e^{-\pi \veps^2}.
\]
Recalling that $\varepsilon^2=n\gamma$, we obtain \eqref{eq.Pij}.
To go on, observe that in order for \eqref{eq.Pij} to hold,
the perturbation radius must be increased from $\varepsilon$ to
$\varepsilon + \varepsilon_0$, where $\varepsilon_0$ is chosen so that \eqref{eq.extraledoux} holds.
By the concentration bound for the uniform measure on the hypercube $
\alpha(\varepsilon_0) <
e^{-\pi \varepsilon_0^2}$,
implying that for \eqref{eq.Pij} to hold it is sufficient that
\begin{equation}
\varepsilon_0
>
\sqrt{\frac{1}{\pi}
\ln\!\left(\frac{1}{V_0}\right)}.
\label{eq.extrafinal}
\end{equation}

By the above and by recalling once more that $\veps^2 = n \gamma$, we conclude that in order to be able to craft a targeted adversarial example with probability at least $1 - U_i e^{-\pi n \gamma}$, it is sufficient that:
\begin{equation*}
\frac{(\veps + \veps_0)^2}{n} = \frac{\veps^2}{n} \left(1 + \frac{\veps_0}{\veps} \right)^2 > \gamma \Bigg( 1 + \sqrt{\frac{\ln\left(\frac{1}{V_0}\right)}{\pi n \gamma}} \Bigg)^2.
\label{eq.overall}
\end{equation*}
A valid choice for the bound in \eqref{eq.extradist} is then
\begin{equation*}
\gamma^{i \rightarrow j} = \gamma \Bigg( 1 + 2 \sqrt{\frac{\ln\left(\frac{1}{V_0}\right)}{\pi n \gamma}} \Bigg)^2,
\label{eq.boundex}
\end{equation*}
that tends to $\gamma$ when $n \rightarrow \infty$, thus completing the proof of the theorem.
\hfill$\square$
\end{proof}

The quantity $\gamma^{i \rightarrow j}$ upper bounds the distortion required to enforce a specific target class $j$ starting from samples of class $i$. The theorem shows that $\gamma^{i \rightarrow j}$ is asymptotically equivalent to the untargeted distortion $\gamma$, namely $\gamma^{i \rightarrow j}/\gamma \to 1$ as $n \to \infty$. Hence, under the stated assumptions, targeted attacks incur only a vanishing relative distortion overhead compared with untargeted attacks.

It is worth emphasizing that the result is established for the strongest notion of targeted adversarial risk, namely for each individual source–target pair $(i,j)$. 
Consequently, it holds \emph{a fortiori} for the averaged definitions of targeted risk, since averaging can only decrease the worst-case requirement.

With regard to the condition $V_j \ge V_0 > 0$ for all $j$, this is a reasonable assumption in typical classification settings under mild structural conditions. 
In particular, if the number of classes $m$ does not grow with the input dimension $n$, and if each class has non-vanishing prior probability and is represented by a non-degenerate region of the input space, then a classifier achieving good accuracy (in the absence of attacks) must assign a non-negligible portion of $[0,1]^n$ to each class. 
Since the decision regions form a partition of the hypercube with unitary total volume, their average volume is $1/m$, and it is natural to assume the existence of a dimension-independent lower bound $V_0$ such that $V_j \ge V_0$ for all $j$.

It is worth mentioning that while our analysis is carried out under the CI definition, 
in realistic high-dimensional settings, when the perturbation magnitude is small and the classifier achieves sufficiently high standard accuracy, the CI, ER, and PC definitions tend to be nearly equivalent, as discussed in Section~\ref{sec.notanddef}. In fact, extending the present theorem to the ER definition would require an additional structural assumption on the classifier, namely that for every pair $i \neq j$ the error regions $\CC_i \cap \FF_j$ are non-empty. 
Indeed, under the ER perspective, targeted adversarial examples can only be constructed if the classifier already misclassifies some samples from class $i$ as class $j$. 
While this condition is typically satisfied in practical large-scale classification systems with non-zero error rates across classes, it cannot be guaranteed in full generality without explicitly assuming the existence of such cross-class error regions. 

\section{Targeted Adversarial Examples: Empirical Analysis}
\label{sec:targeted_experimental_setup}
\begin{figure*}
    \centering
    \includegraphics[width=\textwidth]{images/asr_vs_mse__all_classes__VegSeed__asr0.9__targeted_all.png}
    \caption{ASR vs. $\log(\text{MSE})$ in the targeted adversarial attack scenario for all target classesa and across all image resolutions. Plots refer to the most robust models trained on the VegSeed dataset. Each colour in the bundles refers to a different target class (ASR$_j$).}
    \label{fig::vegseed_targeted_curves}
\end{figure*}
In this section, we empirically analyze the behavior of targeted adversarial attacks as input dimensionality increases. Using the same hierarchical datasets, models, and attack methodology used for the untargeted case, we evaluate the distortion required to enforce specific target classes and compare it with the untargeted case. The goal of this analysis is to determine whether the additional control imposed by targeted attacks significantly increases the distortion required to craft adversarial examples and how such an increase depends on input dimensionality.
Specifically, we empirically assess how the adversarial risk $\texttt{Risk}^{j}_\veps$ for a given target class $j$ evolves when the input dimensionality grows. To do so, given a class $j$, we define a target-dependent subset of the dataset, denoted as $\NN_j$, containing all samples $x$ that are neither ground-truth members of class $j$ nor currently assigned to class $j$ by the classifier $f$, formally:
\begin{equation}
\label{eq::subset_targeted}
    \NN_j = \{ x \in \mathcal{D} \mid x \notin \CC_j \cup \FF_j \}.
\end{equation}
This set represents the pool of valid candidates for a targeted attack toward class $j$, ensuring that the attack is performed only on samples that neither belong to nor are already classified as the target class prior to perturbation.
The attack is performed for each subset $\NN_j$, by adopting the same evaluation setting described in Section \ref{sec:allmodels}, while fixing $\alpha=0.01$.
In particular, the attack on a sample $x \in \NN_j$ aims at constructing an adversarial example $x^{i \rightarrow j} \in \FF_j$ with the minimum perturbation.

With this setting, and for any maximum distortion, we computed the targeted attack success rate ASR$_j$ for the target class $j$ defined as

\begin{equation}
\mathrm{ASR}_{j} =
\frac{1}{|\NN_j|} \sum_{x_i \in \NN_j} \mathbf{1}
\left[ f(x_i') \in \FF_j \right],
\end{equation}

where, as usual, $x_i'$ is the output of the attack.

It is worth noting that ASR$_j$  differs slightly from $\texttt{Risk}^{j}_\veps$ since it does not account for the samples that are already misclassified by $f$ as class $j$. We decided to rely on ASR$_j$ for our analysis since it directly gives a measure of how successful the attack is regardless of the misclassification due to the non-ideality of the classifier, and because for accurate classifiers the fraction of samples already misclassified in the target class is typically negligible. 

The results we obtained for the VegSeed dataset are reported in Fig.~\ref{fig::vegseed_targeted_curves}\footnote{Similar results were obtained for the other datasets, but are not reported here due to space limitations.}, where the curves of ASR$_j$ are shown as a function of the MSE for all target classes.
Three main observations can be drawn from these results. First, for all target classes, the dependence of ASR$_j$ on the distortion closely follows the same monotonic behavior observed in the untargeted case, although with a target-dependent shift in the distortion required to achieve a given success rate. Second, the target class significantly affects the attack difficulty, with some targets requiring substantially larger perturbations than others. Third, as the input dimensionality increases, all curves systematically shift leftward, showing that targeted adversarial examples also become easier to craft in higher dimension.

To account for the dependence on the target class, Fig.~\ref{fig::all_targeted} reports the ASR averaged over all target classes, thus paralleling the definition of $\texttt{Risk}^{t}_\veps$. In addition, Fig.~\ref{fig::all_targeted_worst} reports a worst-case setting corresponding to the most difficult target class. In both cases, and across all datasets, the distortion required to achieve a given ASR decreases as the input dimension increases, confirming that the dimensionality effect observed for untargeted attacks persists in the targeted setting.

\begin{figure*}
    \centering
    \includegraphics[width=\textwidth]{images/ASR_vs_mse__all_datasets__asr0.9__targeted.png}
    \caption{Average ASR vs. $\log(\text{MSE})$ in the targeted adversarial attack scenario for the most robust models across datasets. From left: VegSeed, Food-30, and ResynthDB dataset. Legend values indicate the MSE corresponding to an ASR of 90\% for each model.}
    \label{fig::all_targeted}
\end{figure*}
\begin{figure*}
    \centering
    \includegraphics[width=\textwidth]{images/ASR_vs_mse__all_datasets__asr0.9__targeted_worst.png}
    \caption{ASR vs. $\log(\text{MSE})$ in the worst-case targeted adversarial attack scenario for the most robust models across datasets. 
    Legend values indicate the MSE corresponding to an ASR of 90\% for each model.}
    \label{fig::all_targeted_worst}
\end{figure*}

We also carried out a detailed pairwise analysis by considering all source-target class pairs and computing either the distortion required to reach a fixed target success rate or the achieved ASR for a fixed distortion level. The corresponding results, obtained in the form of full source-target matrices, are not reported here due to space limitations. Nevertheless, they consistently confirm the same qualitative behavior observed above: the dependence on input dimensionality remains unchanged across all source-target pairs, and no source-target combination resulted in practically unattainable attacks.

Finally, we quantify the additional distortion required to move from untargeted to targeted attacks. Table~\ref{tab:merged_mse_full} reports the MSE corresponding to ASR$=90\%$ in the untargeted (U) and targeted (T) cases, together with the absolute difference (T$-$U) and the ratio T/U, respectively averaged across target classes and evaluated in the worst-case setting.
A first clear observation is that targeted attacks systematically require larger perturbations than untargeted ones, as expected from the additional control imposed on the attack outcome. At the same time, the absolute distortion gap T$-$U decreases consistently across all datasets and in both averaging settings as the input dimensionality increases. This behavior is in line with the theoretical result proved in Section~VII, according to which the additional distortion required to enforce a prescribed target class becomes asymptotically negligible.
In most cases, the ratio T/U also exhibits a decreasing trend as the input size grows, in agreement with the theoretical prediction that the relative overhead of targeted attacks should progressively vanish. The few non-monotonic behaviors occur mainly at the highest resolutions, particularly for the ResynthDB dataset, where the perturbations become extremely small and ratio estimates are correspondingly more sensitive to numerical fluctuations. Overall, these results indicate that, even in realistic finite-dimensional settings, the additional cost required to impose a specific target class remains limited and tends to become less significant when the input dimension increases.

\begin{table}
\centering
\caption{Targeted (T) vs Untargeted (U) MSE, on average and worst case scenarios.}
\label{tab:merged_mse_full}
    \begin{tabular}{lcccccc}
    \toprule
    \textbf{Dataset} & \textbf{Case} & \textbf{MSE} & \textbf{64} & \textbf{128} & \textbf{256} & \textbf{400} \\
    \midrule
        \multirow{7}{*}{VegSeed} & & U & 6.67e-4 & 1.00e-4 & 7.36e-5 & 4.32e-5 \\
        \cmidrule{2-7}
        & & T & 2.55e-3 & 3.32e-4 & 1.60e-4 & 9.71e-5 \\
        & Avg & T-U & 1.88e-3 & 2.32e-4 & 8.64e-5 & 5.39e-5 \\
        & & T/U& 3.82 & 3.32 & 2.17 & 2.25 \\
    
        \cmidrule{2-7}
        & & T & 4.17e-3 & 4.50e-4 & 2.19e-4 & 1.36e-4 \\
        & Worst & T-U & 3.51e-3 & 3.50e-4 & 1.45e-4 & 9.28e-5 \\
        & & T/U & 6.25 & 4.50 & 2.97 & 3.15 \\
    
    \midrule
        \multirow{7}{*}{Food-30} & & U & 3.15e-4 & 1.49e-4 & 8.65e-5 & 5.66e-5 \\
        \cmidrule{2-7}
        & & T & 1.12e-3 & 4.72e-4 & 2.53e-4 & 1.49e-4 \\
        & Avg & T-U & 8.05e-4 & 3.23e-4 & 1.67e-4 & 9.24e-5 \\
        & & T/U & 3.56 & 3.17 & 2.92 & 2.63 \\
    
        \cmidrule{2-7}
        & & T & 1.72e-3 & 7.46e-4 & 3.59e-4 & 2.50e-4 \\
        & Worst & T-U & 1.40e-3 & 5.97e-4 & 2.72e-4 & 1.93e-4 \\
        & & T/U & 5.46 & 5.00 & 4.15 & 4.41 \\
    
    \midrule
        \multirow{7}{*}{\shortstack[l]{ResynthDB}} & & U & 7.11e-5 & 3.75e-5 & 6.48e-6 & 3.28e-6 \\
        & & T & 2.46e-4 & 8.51e-5 & 1.63e-5 & 1.28e-5 \\
        & Avg & T-U & 1.75e-4 & 4.76e-5 & 9.82e-6 & 9.52e-6 \\
        & & T/U & 3.46 & 2.27 & 2.52 & 3.90 \\
        
        \cmidrule{2-7}
        & & T & 4.59e-4 & 1.28e-4 & 2.41e-5 & 2.13e-5 \\
        & Worst & T-U & 3.88e-4 & 9.03e-5 & 1.76e-5 & 1.80e-5 \\
        & & T/U & 6.46 & 3.41 & 3.71 & 6.48 \\
    \bottomrule
    \end{tabular}
\end{table}

\section{Conclusions}

In this paper we investigated the role of input dimensionality in the emergence of adversarial examples by combining theoretical analysis and large-scale experiments on hierarchical image datasets specifically designed to isolate the effect of image resolution. Our results show that, even if the assumptions underlying concentration-based impossibility results are violated in realistic image domains due to the strong localization of class distributions, adversarial examples become consistently easier to craft as the input dimension increases. This behavior was observed across multiple datasets, network architectures, and defense strategies, and holds for both untargeted and targeted attacks.

For targeted attacks, we  extended existing geometric concentration arguments, showing that under mild assumptions the additional distortion required to enforce a prescribed target class becomes asymptotically negligible. Experimental results confirm that, in practical settings, targeted attacks require only a limited extra distortion with respect to the untargeted case, and that this gap tends to shrink as dimensionality grows.

Taken together, these findings provide strong evidence of a curse of dimensionality underlying adversarial vulnerability. At the same time, they leave open a fundamental question: whether this phenomenon primarily originates from high-dimensional geometry itself, from the actual structure of real data distributions, or from specific properties of deep neural network classifiers.



\vspace{-0.3cm}
\bibliographystyle{IEEEtran}
\bibliography{advexamples.bib}

\end{document}